\documentclass[smallextended]{svjour3}

\usepackage{amssymb}
\usepackage{fullpage}
\usepackage{times}
\usepackage{algorithm}
\usepackage[noend]{algpseudocode}
\usepackage{lineno}
\usepackage{amsmath}
\usepackage{ntheorem}
\usepackage{graphicx}
\usepackage{tikz}
\usepackage{pgfplots}
\usepackage{todonotes}
\usetikzlibrary{
                chains,
                positioning,
                shapes.geometric,
                shapes,arrows,
                patterns,
                decorations.markings
                }
\usepackage{subcaption}
\usepackage{hyperref}
\usepackage{float}
\usepackage{flafter}

\makeatletter
\def\BState{\State\hskip-\ALG@thistlm}
\makeatother

\smartqed
\begin{document}

\tikzstyle{block} = [rectangle, draw, fill=blue!20, 
    text width=5em, text centered, rounded corners, minimum height=3em]
\tikzstyle{line} = [draw, -latex']

\title{RELF: \underline{R}obust Regression 
Extended with \underline{E}nsemble \underline{L}oss \underline{F}unction}

\author{Hamideh Hajiabadi \and Reza Monsefi \and Hadi Sadoghi Yazdi }

\institute{Hamideh Hajiabadi \at
              Computer Department, Ferdowsi University of Mashhad (FUM) \\
 \email{Hamideh.hajiabadi@mail.um.ac.ir}          
           \and
           Reza Monsefi \at
              Computer Department, Ferdowsi University of Mashhad (FUM) \\
              \email{monsefi@um.ac.ir}  \and
           Hadi Sadoghi Yadi \at
              Computer Department, Ferdowsi University of Mashhad (FUM) \\
              \email{h-sadoghi@um.ac.ir}
}

\date{Received: date / Accepted: date}
\maketitle

\begin{abstract}
Ensemble techniques are powerful approaches that combine several weak learners to build a stronger one. As a meta-learning framework, ensemble techniques can easily be applied to many machine learning methods. Inspired by ensemble techniques, in this paper we propose an ensemble loss functions applied to a simple regressor. We then propose a half-quadratic learning algorithm in order to find the parameter of the regressor and the optimal weights associated with each loss function. Moreover, we show that our proposed loss function is robust in noisy environments. For a particular class of loss functions, we show that our proposed ensemble loss function is Bayes consistent and robust. Experimental evaluations on several data sets demonstrate that the our proposed ensemble loss function significantly improves the performance of a simple regressor in comparison with state-of-the-art methods. 
\keywords{Loss function \and Ensemble methods \and Bayes Consistent Loss function \and Robustness   }
\end{abstract}


\section{Introduction}
\label{sec:introduction}
Loss functions are fundamental components of machine learning systems and are used to train the parameters of the learner model. Since standard training methods aim to determine the parameters that minimize the average value of the loss given an annotated training set, loss functions are crucial for  successful trainings  \cite{xiao2017ramp,zhao2010convex}. Bayesian estimators are obtained by minimizing the expected loss function. Different loss functions lead to different Optimum Bayes  with possibly different characteristics. Thus, in each environment the choice of the underlying loss function is important, as it will impact the performance\cite{uhlich2012bayesian,wang2003multiscale}.

\medskip

Letting $\vec{\hat{\theta}}$ denote the estimated parameter of a correct parameter $\vec{\theta} $, the loss function $L(\vec{\hat{\theta}},\vec{\theta})$ is a positive function which assigns a loss value to each estimation, indicating how inaccurate the estimation is \cite{steinwart2008support}. Loss functions assign a value to each sample, indicating how much that sample contributes to solving the optimization problem. 
Each loss function comes with its own advantages and disadvantages. In order to put our results in context, we start by reviewing three popular loss functions ($0$-$1$, Ramp and Sigmoid) and we will give an overview of their advantages and disadvantages. 

\medskip

Loss functions assign a value to each sample representing how much that sample contributes to solving the optimization problem. If an outlier is given a very large value by the loss function, it might dramatically affect the decision function \cite{hajiabadi2017extending}. The $0$-$1$  loss function is known as a robust loss because it assigns value 1 to all misclassified samples --- including outliers --- and thus an outlier does not influence the decision function, leading to a robust learner. On the other hand, the $0$-$1$ loss penalizes all misclassified samples equally with value $1$, and since it does not enhance the margin, it cannot be an appropriate choice for applications with margin importance \cite{xiao2017ramp}.

\medskip

The Ramp loss function, as another type of loss functions, is defined similarly to the $0$-$1$ loss function with the only difference that ramp loss functions also penalize some correct samples, those with small margins. This minor difference makes the Ramp loss function appropriate for applications with margin importance \cite{tang2018ramp,xiao2017ramp}. On the downside, the Ramp loss function is not differentiable, and hence not suitable for optimization purposes. 

\medskip

The Sigmoid loss function is almost the same as $0$-$1$ loss functions, except that it is differentiable, which in turn  makes optimization significantly easier. However, it assigns a (very small) non-zero value to correct samples, meaning that those samples would also contribute to solving the optimization problem. Hence, in spite of easier optimization, the Sigmoid loss function leads to a less sparse optimization problem in comparison with the $0$-$1$ loss function \cite{vapnik1998statistical}. 

\medskip

\par
There are many other examples of different loss functions showing that while a loss function might be good for certain applications,  it might be unsuitable for many others. Inspired by ensemble methods, in this paper we propose the use of an ensemble of loss functions during the training stage. The ensemble technique is one of the most influential learning approaches. Theoretically, it can boost weak learners, whose accuracies are slightly better than random guesses, into arbitrarily accurate strong learners \cite{napoles2017rough}. This method would be effective when it is difficult to design a powerful learning algorithm directly \cite{bai2014bayesian,zhang2016bayesian,mannor2001weak}. As a meta-learning framework, it can be applied to almost all machine learning algorithms to improve their prediction accuracies. 

\medskip
Our goal in this paper is to propose a new ensemble loss function, which we later apply to a simple regressor. Half-Quadratic (HQ) minimization, which is a fast alternating direction method, is used to learn regressor's parameters. In each iteration, the HQ tries to approximate the convex or non-convex cost function with a convex one and pursue optimization \cite{geman1992constrained}. Our main contributions are as follows. 
\begin{itemize}
\item Inspired by ensemble-induced methods, we propose an ensemble loss whose properties are inherited from its base loss functions. Moreover, we show that each loss is a special case of our proposed loss function.
\item We develop both online and offline learning frameworks to find the weights associated with each loss function, and so to build an ensemble loss function. For a particular class of base losses, we prove that the resulting ensemble loss function is Bayes consistent and robust. 
\end{itemize}
\par
This paper is structured as follows. We review some existing loss functions and several promising ensemble regressors in Section~\ref{sec:related works} which contains two subsections for each. We briefly explain about Half-Quadratic (HQ) programming in Section~\ref{sec: HQ}. Our proposed framework is discussed in Section~\ref{se: proposed method}, for which we provide implementation and test results in Section~\ref{sec: experiments}.  Finally, we conclude in Section~\ref{sec: conclusion} with a list of problems for future work.

\section{A Review of Loss Functions}\label{sec:related works}

This work draws on two broad areas of research: loss functions, and ensemble-based regression methods. In this section, these two areas are fully covered. 

\subsection{\textbf{A Review of Loss Functions}}
In machine learning, loss functions are divided in two categories, margin-based and distance-based \cite{steinwart2008support}. Margin-based loss functions  are used for classification \cite{feng2016robust,zhang2001text,khan2013semi,bartlett2006convexity}, while Distance-based loss functions are generally used for regression. In this paper, we only focus on distance-based loss functions.  

\paragraph{Distance-Based Loss Functions} Let $\vec{x},y, f(\vec{x})$ denote an input, the corresponding true label and the estimated label respectively. A distance-based loss function, is a penalty function $\phi(y-f(\vec{x}))$ where $y-f(\vec{x})$ is called distance \cite{sangari2016convergence,chen2017kernel}. The risk associated with the loss function $\phi(.)$ described as

$$
R_{\phi,P}(f)=\int_{X\times Y}\phi (y-f(\vec{x}))dP(X,Y)
$$
\noindent where $P(X,Y)$ is the joint probability distribution over $X$ and $Y$. The ultimate aim of a learning algorithm is to find a function $f^*$ among a fixed class of function $\mathcal{F}$ for which the risk $R_{\phi,P}(f^{*})$ is minimal \cite{steinwart2008support,painsky2016isotonic},
    $$f^{*} =\arg\min_{f \in \mathcal{F}} R_{\phi,P}(f).$$  
Generally $R(f)$ cannot be computed because the distribution $P(X,Y)$ is unknown. However, an approximation of $R(f)$ which is called empirical risk can be computed by averaging the loss function on the training set containing $n$ samples \cite{holland2016minimum},

$$
R_{\text{emp}}(f)=\frac {1}{n}\sum _{i=1}^{n}\phi(y_{i}-f(\vec{x_{i}})) \text{, and }$$
 $$ \hat{f}  = \arg \min _{f\in \mathcal{F}} R_{\text{emp}} (f).  $$

\medskip
A loss function should be such that we arrive at Bayes decision function after minimizing the associated risk under given loss function. The loss is said to be Bayes consistent if by increasing the samples size, the resulting function converges to the Bayes decision function \cite{zhang2004statistical,buja2005loss,friedman2000additive}. The Bayes decision function is fully explained in the cited papers. 

\begin{table}[h!]
\caption{\label{table:Bayse loss functions} Well-known Bayes Consistent Loss Functions  }
\centering
\begin{tabular}{l  l  } 
\hline
Algorithm 	& $\phi(y,f(\vec{x}))  $\\
\hline
Square \cite{lopez2018robust}&	$(1-yf(\vec{x}))^2$\\
Hinge \cite{bartlett2008classification} &	$\max(0,1-yf(\vec{x}))$\\
Exp. 	&$ \exp (-\beta y f(\vec{x}))$\\
Logistic \cite{fan2008liblinear}&	$\ln (1+e^{-(yf(\vec{x})} )$\\
\hline
\end{tabular}
\end{table}

\par
Table \ref{table:Bayse loss functions} and Fig.\ref{loss} illustrate known examples of Bayes consistent loss functions \cite{masnadi2010design,masnadi2009design}. Fig.\ref{loss} shows that all of these functions are convex and unbounded. As we mentioned in the previous section, loss functions assign a value to each sample which indicates how much that sample contributes to solving the optimization problem. Unbounded loss functions assign large values to samples with large errors and thus they are more sensitive to noise. Hence, under unbounded loss functions, the robustness deteriorates in noisy environments. 

\begin{figure}[h]
\centering
\begin{tikzpicture}
 \begin{axis}[grid,xlabel={$y-f(x)$},ylabel=loss]

\addplot [color=red,thick, domain=-2:2.5]{ln(1+exp(-x))};
\addplot [color=blue,thick, domain=-2:2.5] plot {max(0,1-x)} ;
\addplot [color=green,thick, domain=-2:2] plot {x^2} ;
\addplot [color=orange,thick, domain=-1.5:2.5] plot {exp(-x)} ;
 \end{axis}
\end{tikzpicture}
\caption{\label{loss} Well-known Bayes consistent loss functions, Hinge (blue), Square (green), Logistic (red) Exp. (orange)}
\end{figure}
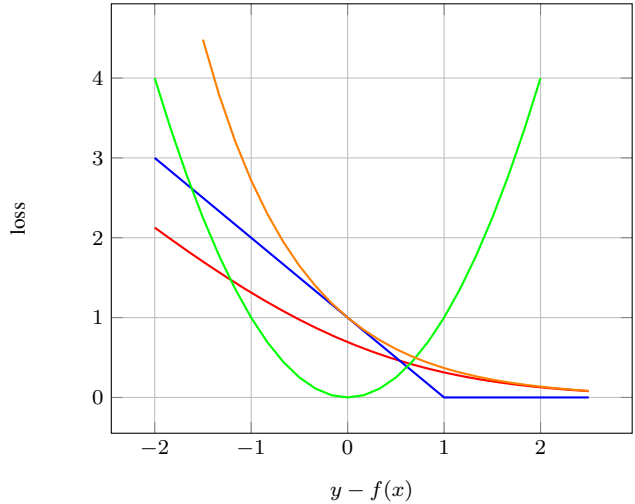
\par
Fig. \ref{savage} shows two unbounded loss functions (the Exp. loss and the Logistic loss) and a bounded one (the Savage loss).  SavageBoost which uses the Savage loss function leads to a more robust learner in comparison with AdaBoost and Logitboost which uses the Exp. loss and the Logistic loss function respectively \cite{masnadi2009design}. Several researchers suggested that although convex loss functions make optimization easier, the robustness deteriorates in the presence of outliers \cite{miao2016rboost}. For example, while LS-SVR uses the Square loss and is sensitive to outliers, RLS-SVR uses the non-convex least squares loss function to overcome the limitation of LS-SVR \cite{wang2014robust}. 

\medskip
There are many other distance-based loss functions which are not considered as Bayes consistent but have been widely used in literature. Some of these are shown in Table \ref{table:widely used loss}. 

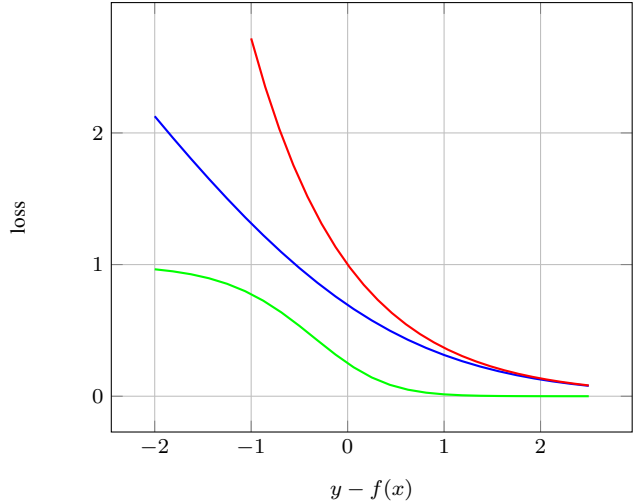
\begin{figure}[h]
\centering
\begin{tikzpicture}
 \begin{axis}[grid,xlabel={$y-f(x)$},ylabel=loss]

\addplot [color=blue,thick, domain=-2:2.5]{ln(1+exp(-x))};
\addplot [color=red,thick, domain=-1:2.5] plot {exp(-x)};
\addplot [color=green,thick, domain=-2:2.5] plot {1/(1+exp(2*x))^2};

 \end{axis}
\end{tikzpicture}
\caption{\label{savage} the Exp. loss (red), the Logistic loss (blue) and the Savage loss function (green)}
\end{figure}

\begin{table}[h]
\caption{\label{table:widely used loss} Some widely used loss functions}
\centering
\begin{tabular}{l l }
\hline
Name&	Description\\
\hline
Absolute \cite{vapnik2013nature}&
$l_{abs}=|y-f(\vec{x})|$\\
Huber \cite{huber1964robust}&
$l_{Huber}=\begin{cases}
\frac{(y-f(\vec{x}))^2}{4\epsilon}&\text{for }|y-f(\vec{x})|<2 \epsilon \\
|y-f(\vec{x})|-\epsilon &\text{otherwise}  
\end{cases}$\\

C-loss \cite{liu2006correntropy} &
$l_{corr}=1- \exp ({\frac{-(y-f(\vec{x}))^2}{2\sigma^2}})$\\

$\epsilon$-insensitive Ramp-loss \cite{tang2018ramp} &
$l_{\epsilon-Ramp}=\begin{cases} \alpha & \text{for }|y-f(\vec{x})|\geq \alpha  \\
|y-f(\vec{x})| &\text{for }  \epsilon<|y-f(\vec{x})|<\alpha\\
0 &\text{for }  |y-f(\vec{x})|\leq \epsilon 
\end{cases}$\\
\hline
\end{tabular}
\end{table}

\par
\smallskip
The Huber loss function which is the combination of the Square and the Absolute function is shown in Table \ref{table:widely used loss}. It was utilized in robust regression, and the results showed a significant improvement in its robustness comparing with the standard regression. The only difference between Absolute and Huber loss functions is at the point of $y-f(\vec{x})=0$. While the Absolute loss function is not differentiable at the point of $y-f(\vec{x})=0$, this problem is fully addressed by the Huber loss function \cite{huber1964robust}. 

\par
\smallskip
Correntropy which is rooted from the Renyi's entropy is a local similarity measure. It is based on the probability of how much two random variables are similar in the neighbourhood of the joint space. The joint space can be adjusted by a kernel bandwidth. The C-loss function which is shown in Table \ref{table:widely used loss} is
inspired by Correntropy criteria and is known as a robust loss function \cite{liu2006correntropy}. Several researchers have used the C-loss function to improve the robustness of their learning algorithms \cite{peng2017maximum,liu2007correntropy,zhao2012adaptive,chen2014steady,chen2016generalized}. 

\par
\smallskip
$\epsilon$-insensitive Ramp-loss which is inspired by Ramp loss function \cite{xiao2017ramp} is proposed in \cite{tang2018ramp}. It is a kind of robust and margin enhancing loss function which was applied to a linear and kernel Support Vector Regression (SVR) in \cite{tang2018ramp}. The weights assigned to each sample are limited to be no higher than a pre-defined value of $\alpha$, thus the
negative effect brought by outliers can be effectively reduced. 

\smallskip
Loss functions can also be used in Dimensional Reduction (DR) purposes which is a kind of feature reduction technique. For example, in \cite{xie2018matrix} nuclear norm (N norm) is used as the loss function in solving the DR problem based on matrix regression
model. N norm can
well preserve the low-rank information of samples and result in a low dimensional data and would be a good choice for DR purposes.

\par
\smallskip
While each individual loss function has its own advantages and disadvantages, in this paper we propose an ensemble loss function which is a combination of several individual loss functions. By doing so, we hope to produce a strong ensemble loss function which its advantages are inherited from each individual loss function. In the next section Ensemble learning and several promising ensemble method are discussed.

\subsection{\textbf{Ensemble Learning}}
Ensemble learning combines outputs of several models to make a prediction. They aim to improve the overall accuracy and robustness  over individual models \cite{dudek2016pattern,mendes2012ensemble}. 
The focus of the most ensemble methods is on classification problems, however relatively few have paid attention to regression tasks. Ensemble methods are comprised of two steps: (1) generation step in which a set of base learners are built (2) integration step which involves in combining these base learners in order to make the final prediction. Base learners can be combined statistically or dynamically. A static combination uses predefined combination rules for all instances while the dynamic one makes use of different rules for different instances \cite{ko2008dynamic,cruz2015meta}. In the following the most promising Ensemble methods are discussed. 
\par
\smallskip
Two of the most appealing ensemble methods for regression trees are Bagging and random forest. They are the most commonly used algorithms due to their simplicity, consistency and accuracy \cite{breiman1996bagging,breiman2001random}. Breiman Random Forest approach first constructs a multitude of decision trees and output the class that is the mode of the classes (classification) or mean prediction (regression) of the individual trees. The trees are later modified to incorporate randomness, a split used at each node to select randomly feature subset. moreover, the subset considered in one node is completely independent of the subset in the previous node. 
\par
\smallskip
There are some interesting ensemble methods on neural networks,  one which is based on negative correlation, is called Evolutionary Ensembles with Negative Correlation Learning (EENCL) \cite{liu2000evolutionary}.  it randomly changes the weights of an existing neural network by using mutation. It also obtains ensemble size automatically. The importance of this approach is due to its theoretical foundations. Another interesting method on neural networks was presented in 2003 \cite{islam2003high}
which builds the base learners and the ensemble one simultaneously. Therefore, it saves time in comparing with the methods that first generate base learners and then combine them. 
\par
 \smallskip
There are some other research based on local experts which are specialized in local prediction, aim to achieve better prediction accuracy in comparison with globally best models. Recently a Locally Linear Ensemble Regressor (LLER) has been proposed which first divides the original dataset into some locally linear regions by employing an EM procedure and then builds a linear model per each region in order to form the ensemble model \cite{kang2018locally}.    
\par
\smallskip
In this paper, we dynamically combine several loss functions and the weights associated with each individual loss is obtained through the training phase using Half-Quadratic (HQ) optimization. In the following section, we provide an overview of HQ optimization.

\section{Half-Quadratic Optimization}\label{sec: HQ}
Let $\phi_\nu (.)$ be a function of vector $\vec{\nu}\in R^n$ and defined as $\phi(\vec{\nu})=\sum_{j=1}^n \phi_{\vec{\nu}}(\nu_i)$ where $\nu_i$ is the $j$th entry of $\vec{\nu}$. In machine learning problems, we often aim to minimize an optimization problem like 

\begin{equation}\label{eq:optimization}
min\sum_{j=1}^n\phi(\nu_j )+J(\vec{\nu})
\end{equation}

\noindent
where usually $\vec{\nu}$ is the learner's parameters and $\phi(.)$ is a loss function which can be convex or non-convex and $J(\vec{\nu})$ is a convex penalty function on $\vec{\nu}$ which is optional and considered as the regularization term. According to Half-Quadratic (HQ) 
optimization \cite{he2014half}, for a fixed $\nu_j$ , we have

\begin{equation}\label{eq: hq2}
    \phi(\nu_j)= \min_{p_j}
Q(\nu_j,p_j ) + \psi(p_j )
\end{equation}
where $\psi(.)$ is the convex conjugate function of $\phi(.)$ and $Q(\nu_j,p_j )$ is an HQ function which is modeled by the additive or the multiplicative form. The additive and the multiplicative forms for $Q(\nu_j,p_j)$  are respectively formulated as $(\nu_j-p_j )^2$ and $\frac{1}{2} p_j \nu_j^2$. Let $Q_{\vec{\nu}} (\vec{\nu}, \vec{p})=
\sum_{j=1}^n Q(\nu_j, p_j)$, the vector form of Equation \ref{eq: hq2} is as follow

\begin{equation}\label{eq:hq3}
\phi_{\vec{\nu}} (\vec{\nu})=\min_{\vec{p}}Q_{\vec{\nu}}(\vec{\nu},\vec{p})+\sum_{j=1}^n\psi(p_j ).
\end{equation}

\medskip
\noindent
By substituting the equations \ref{eq:hq3} for $\phi(\nu_j)$ in Equation \ref{eq:optimization}, the following cost function is obtained,

\begin{equation}\label{eq:HQ}
\min_{\vec{\nu}}\lbrace \phi_{\vec{\nu}} (\vec{\nu})+J(\vec{\nu})\rbrace=\min_{\vec{p},\vec{\nu}}Q_{\vec{\nu}}(\vec{\nu},\vec{p})+\sum_{j=1}^n\psi(p_j )+J(\vec{\nu}).
\end{equation}

\noindent
Assuming the variable $\vec{\nu}$ fixed, the optimal value for $\vec{p}$ is obtained by the minimization function $\delta(.)$ which is computed as
$$p_j=\delta(.)=\arg\min_{\vec{p}_j} \left\lbrace Q(\vec{\nu},\vec{p})+\sum_{j=1}^n \psi(p_j)\right\rbrace$$

\noindent 
and it is only related to $\phi(.)$ (some specific forms of  $\phi(.)$ and the corresponding $\delta(.)$ is shown in Fig.\ref{HQ}). For each $\vec{\nu}$ the value of $\delta(.)$ is such that
\begin{equation*}
Q(\nu_j,\delta(\nu_j))+\psi(\delta(\nu_j)) \leq Q(\nu_j,p_j)+\psi(p_j).
\end{equation*}
\par
The optimization problem \ref{eq:HQ} is convex since it is the summation of three convex functions and is can be minimized alternating steps as follows 

\begin{equation*}
\vec{p}^{(k+1)}=\delta(\vec{\nu}^{(k)})
\end{equation*}
\begin{equation*}
\vec{\nu}^{(k+1)}=\arg\min_{\vec{\nu}}\left\lbrace Q(\vec{\nu},\vec{p}^{(k+1)})+J(\vec{\nu})\right\rbrace
\end{equation*}

\noindent
where $k$ denotes the iteration number. At each iteration, the objective function is decreased until convergence. The HQ method is fully explained in \cite{geman1995nonlinear,he2014half}.

\begin{figure}[h]
\centering
\includegraphics[width=\textwidth]{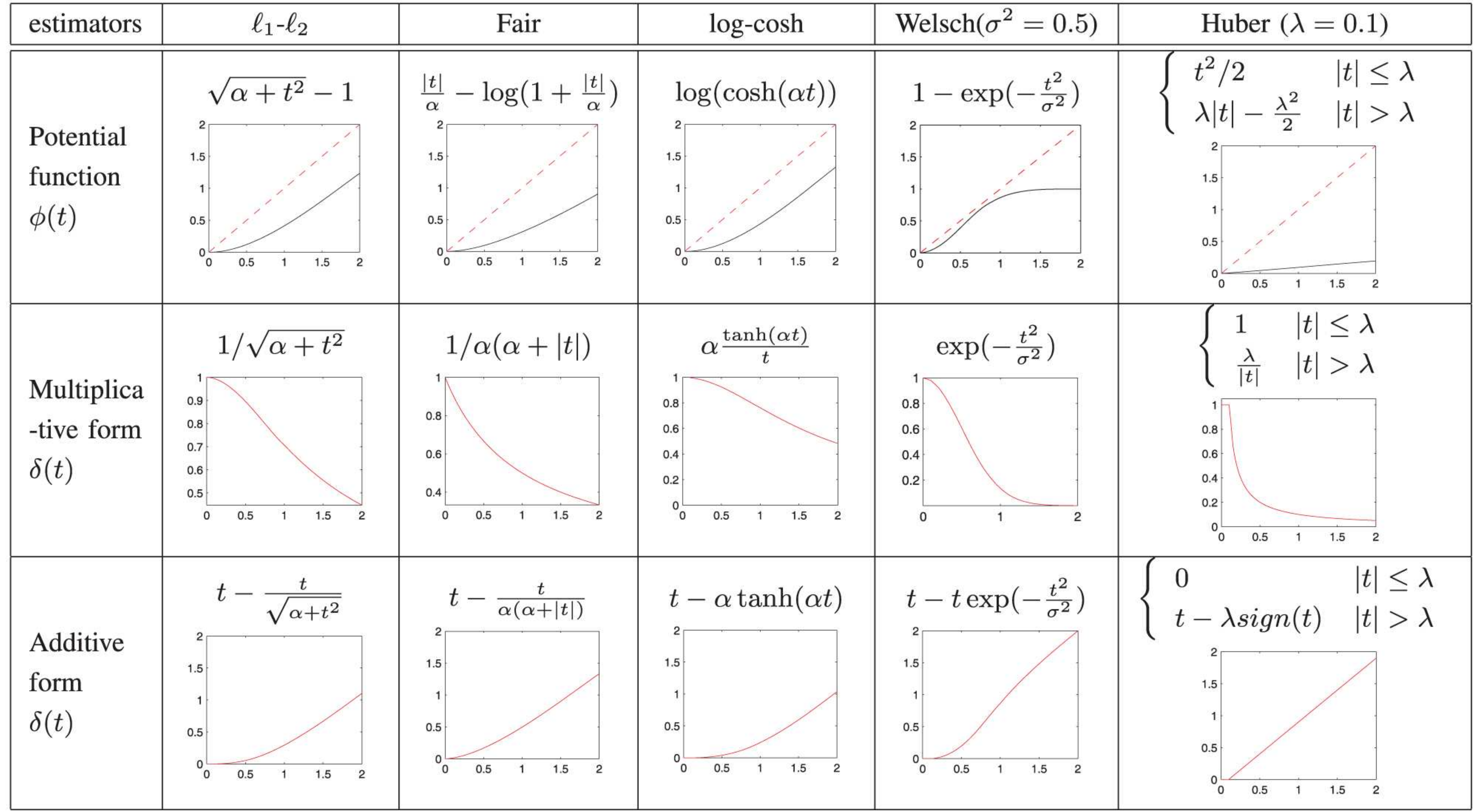}
\caption{\label{HQ} some estimators and the equivalent $\delta(.)$ for additive and multiplicative HQ forms \cite{he2014half}}.
\end{figure}

\section{The Proposed Method}\label{se: proposed method}

Let $(X,Y)$ denote all samples where $X=\lbrace \vec{x_1},\vec{x_2},\ldots ,\vec{x_N}\rbrace$ are the inputs and $Y=\lbrace y_1, y_{2}, \ldots ,y_{N}\rbrace$ are the labels. Let $w$ be the parameters of the regressor which estimates the predicted label by $\hat{y}=f(\vec{x},\vec{w})$. Let $\{\phi_i(y,\hat{y})\}_{i=1}^m$ denote $m$ weak loss functions. We aim to find an optimal $\vec{w}$ and the best weights, $\{\lambda_1,\lambda_2,\dots,\lambda_m\}$, associated with each loss function. We need to add a further constraint to avoid yielding near zero values for all $\lambda_i$ weights. Our proposed ensemble loss function is defined in Equation \ref{eq:Proposed loss}. 
\begin{equation}\label{eq:Proposed loss}
L=\sum_{k=1}^m \lambda_k \phi_k(y,\hat{y}),\sum_{k=1}^m \lambda_k =1,\lambda _{k}\geq 0
\end{equation}

\noindent
Fig. \ref{fig: learning-diagram} shows the model of our proposed 
method with the bold box representing the novelty of this paper. In 
the training phase, the weights associated with $M$ loss functions are learned and our proposed ensemble loss function is formed. To ease computations we 
select each $\phi _{i}(y-f(\vec{x}))$ from M-estimator functions introduced in 
Fig.\ref{HQ}. 

\tikzset{
  big arrow/.style={black,line width=1mm,
    decoration={markings,mark=at position 1 with {\arrow[scale=1,#1]{>}}},
    postaction={decorate},
    shorten >=2pt},
  big arrow/.default=black}
\begin{figure}[h]
\centering
\begin{tikzpicture}

\node  [block,fill=red!30,minimum size=1.5cm,fill=magenta] at (8,6) (risk){Risk Optimizer};

\node[cylinder,draw=black,thick,aspect=0.4,minimum height=1.7cm,minimum width=1.5cm,shape border rotate=90,cylinder uses custom fill, cylinder body fill=magenta,cylinder end fill=magenta!10,align=center,node distance=2.2cm,above of=risk]   (loss set){Set of \\ Loss Functions};

\node  [block,minimum size=2cm,fill=cyan] at (3,7.5) (regressor){Regressor};

 \node [rectangle, draw=black, text width=4.5cm, minimum height=6.5cm,thick] (cc) at (8,7.5) {};
 
 \node [rectangle, draw=black, text width=14cm, minimum height=8cm,dashed] (training phase) at (4,7.5) {};
 
 \node [rectangle, draw=white, text width=3cm, minimum height=1cm,dashed,node distance=6.7cm, left of=training phase, rotate= 90] (label1)  {Training Phase};

  \node[cylinder,draw=black,thick,aspect=0.4,minimum height=1.7cm,minimum width=1.5cm,shape border rotate=90,cylinder uses custom fill, cylinder body fill=cyan,cylinder end fill=cyan!40,align=center] (train) at (-1,7.5) (train){Training \\ Samples};

 \node[cylinder,draw=black,thick,aspect=0.4,minimum height=1.7cm,minimum width=1.5cm,shape border rotate=90,cylinder uses custom fill, cylinder body fill=cyan,cylinder end fill=cyan!40,align=center,node distance=6cm,below of=train] (test)  {Test \\ Samples};
 
\node  [block,minimum size=2cm,fill=cyan,node distance=4cm, right of = test]  (relf){RELF}; 

\node [rectangle, draw=black, text width=14cm, minimum height=4cm,dashed] (test phase) at (4,1) {};
 
 \node [rectangle, draw=white, text width=3cm, minimum height=1cm,dashed,node distance=6.7cm, left of=test phase, rotate= 90] (label2)  {Testing Phase};

 \draw[->] (train) -- (regressor);
 \draw[->] (test) -- (relf);
\draw[->] (regressor) -- (cc);
\draw[->] (cc) -- (regressor);
\draw[big arrow] (cc) -- node[above]{Parameters}(relf);

\end{tikzpicture}
\caption{\label{fig: learning-diagram} The Proposed Model.}
\end{figure}
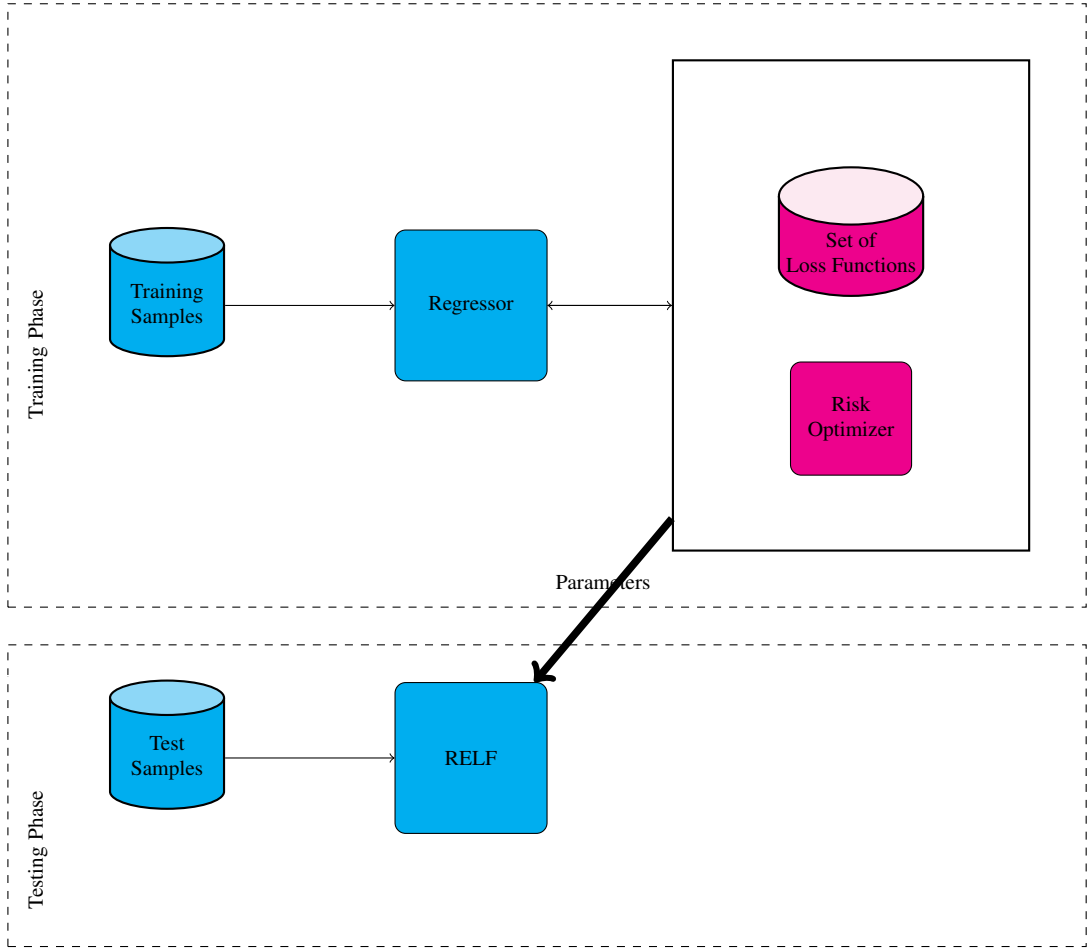
\par
The expected risk of the proposed loss 
function is defined as 

\begin{equation}\label{eq:risk}
\begin{aligned}
R(f)=E_X \left\lbrack \sum_{k=1}^{m}\lambda _{k}\phi _{k}(y,\hat{y})\right\rbrack
=& \sum_{k=1}^{m}\lambda _{k}E\lbrack \phi _{k}(y,f(\vec{x}))\rbrack\\
s.t. &  \sum_{k=1}^{m}{\lambda _{k}}=1,\lambda _{k}\geq 0.
\end{aligned}
\end{equation}

\noindent
As known, under the Square loss, the Bayes estimator is the posterior mean and 
the Bayes estimator with respect to the Absolute loss is the posterior median. As shown in Equation \ref{eq:risk}, the Bayes estimator of our proposed loss function is a weighted summation of all Bayes estimators associated with each individual loss function. For example, if we ensemble the Square and the Huber loss functions, then, the Bayes estimator of our ensemble loss function is a trade-off between the posterior mean and the median. 

\smallskip
In the next two subsections, two properties of 
our proposed loss function are discussed. The third subsection contains the training scheme. 

\subsection{Bayes Consistency}

The advantages 
of using a Bayes-consistent loss function is fully explained in Section \ref{sec:related works}. In the following, it is proved that under some conditions, our proposed loss function is Bayes consistent. 

\begin{theorem}
\label{Bayes consistency}
If each $\phi_{k}(y,f(\vec{x}))$ is a 
Bayes-consistent loss function, then $L_{s}=\sum_{k}{\lambda 
_{k}\phi_{k}(y,f(\vec{x}))}$ is also Bayes consistent. 
\end{theorem}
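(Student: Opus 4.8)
The plan is to reduce the statement to a property of the associated risks and to exploit the fact that the ensemble loss enters the risk \emph{linearly}. Using the linearity of expectation together with the constraint $\sum_k \lambda_k = 1$, I would first record, as already displayed in Equation~\ref{eq:risk}, the identity
\begin{equation*}
R_{L_s}(f) = E_X\!\left[\sum_{k=1}^m \lambda_k \phi_k(y, f(\vec{x}))\right] = \sum_{k=1}^m \lambda_k R_{\phi_k}(f),
\end{equation*}
so that the ensemble risk is a convex combination, with weights $\lambda_k \geq 0$ summing to one, of the individual risks $R_{\phi_k}$. This structural observation is what the whole argument rests on.

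Next I would invoke the definition of Bayes consistency recalled in Section~\ref{sec:related works}: for each $k$, the Bayes decision function $f^{*}$ minimizes $R_{\phi_k}$, i.e. $R_{\phi_k}(f) \geq R_{\phi_k}(f^{*})$ for every admissible $f$. Multiplying each inequality by $\lambda_k \geq 0$ and summing over $k$ yields $R_{L_s}(f) \geq R_{L_s}(f^{*})$, so $f^{*}$ also minimizes the ensemble risk; since the empirical risk converges to the population risk as the sample size grows, the minimizer of the empirical ensemble risk then converges to $f^{*}$, which is exactly Bayes consistency of $L_s$. To make this rigorous I would pass to the pointwise (conditional) risk, showing that for $P$-almost every $\vec{x}$ the conditional risk $\alpha \mapsto \sum_k \lambda_k C_{\phi_k}(\eta(\vec{x}), \alpha)$ is minimized at the Bayes-optimal value, which again follows from applying the nonnegative-combination argument pointwise.

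The step I expect to be the main obstacle is the hidden hypothesis that all the base losses are minimized at the \emph{same} $f^{*}$. For general distance-based losses this fails — the Square loss is minimized by the conditional mean and the Absolute loss by the conditional median — and the discussion preceding the theorem itself notes that the ensemble estimator is a trade-off between such quantities, so the naive convex-combination argument cannot be the whole story. The statement is correct in the margin-based (classification) setting of Table~\ref{table:Bayse loss functions}, where Bayes consistency means classification-calibration and every calibrated loss shares the common minimizer characterized by $\operatorname{sign}(2\eta - 1)$, with $\eta(\vec{x})$ the conditional class probability. Under the additional, and here implicit, assumption that each $\phi_k$ is convex, I would close this gap cleanly: calibration of a convex $\phi_k$ is equivalent to differentiability at $0$ with $\phi_k'(0) < 0$, so the convex conditional risk $C_{\phi_k}(\eta, \cdot)$ has strictly negative derivative at $\alpha = 0$ whenever $\eta \neq 1/2$; taking a nonnegative-weighted sum preserves this sign condition, forcing the minimizer of the ensemble conditional risk onto the correct side of the origin and hence making $L_s$ calibrated. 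I would flag this convexity / common-minimizer requirement explicitly rather than leave it buried, since it is the only place where the argument can break down.
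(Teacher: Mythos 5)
Your proof is correct, and at the decisive step it takes a genuinely different route from the paper's. Both arguments work in the binary classification setting, reduce to the conditional risk $\eta\,\phi_{k}(f(\vec{x}),1)+(1-\eta)\,\phi_{k}(f(\vec{x}),-1)$, and exploit the fact that the ensemble conditional risk is the convex combination $\sum_{k}\lambda_{k}J_{k}$. The paper then argues geometrically: it sets the derivative of the combined conditional risk to zero and shows (for $m=2$, via Fig.~\ref{fig:beyes consistence}) that the ensemble minimizer must lie between the individual minimizers $f_{1}^{*}$ and $f_{2}^{*}$; since each $f_{k}^{*}$ satisfies the sign condition of Equation~\ref{eq: Bayes consistency}, so does any point between them, and the general case is asserted by analogy. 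You instead invoke the characterization of classification calibration for convex losses --- $\phi$ is calibrated if and only if it is differentiable at $0$ with $\phi'(0)<0$ \cite{bartlett2006convexity} --- and observe that this condition is closed under convex combinations, since $\sum_{k}\lambda_{k}\phi_{k}'(0)<0$ whenever the $\lambda_{k}$ are nonnegative and sum to one. Your route buys three things: it requires no localization of the ensemble minimizer at all; it handles arbitrary $m$ with zero extra work (the paper's two-minimizer picture extends only after replacing ``between'' by ``within the interval spanned by the $f_{k}^{*}$''); and it makes explicit the convexity hypothesis that the paper tries to obtain for free via the claim that Bayes consistency implies convexity --- a claim that is false, as the paper's own discussion of the Bayes-consistent, bounded, non-convex Savage loss \cite{masnadi2009design} shows. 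What the paper's approach buys in exchange is the geometric picture that the ensemble Bayes estimator is a trade-off lying between the individual ones, which supports the narrative around Equation~\ref{eq:risk}. Finally, your diagnosis of the regression-versus-classification mismatch (conditional mean versus conditional median as minimizers of distinct distance-based losses) is exactly right: it is the reason the paper's proof silently switches to the margin-based setting in the ``Note'' preceding it, and stating that restriction as a hypothesis, as you do, is the honest way to formulate the theorem.
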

\paragraph{Note} In binary classifications, it is known that the decision function $f^*$ is Bayes consistent if the following equation holds \cite{bartlett2006convexity}. 
\begin{equation}\label{eq: Bayes consistency}
P(y=1|\vec{x})>\frac{1}{2}\rightarrow f^*(\vec{x})=1,\ \ \ P(y=1|\vec{x})<\frac{1}{2}\rightarrow f^{*}(\vec{x})=-1
\end{equation}

\begin{proof}
Let $y$ be $\lbrace +1,-1\rbrace $, by assuming $\eta 
=P(Y=1|X)$, the conditional expected risk under each individual loss 
function, $\lbrace \phi_{k}\rbrace _{k=1}^{m}$, is written as Equation \ref{eq:individual conditional risk}  \cite{masnadi2009design},
\begin{equation}\label{eq:individual conditional risk}
J_{k}=E\lbrack \phi(f(\vec{x}),Y)|X=\vec{x}\rbrack =\eta \phi_{k}(f(\vec{x}),1)+(1-\eta)\phi_{k}(f(\vec{x}),-1).
\end{equation}
\noindent
Each loss function $\lbrace \phi_{k}\rbrace _{k=1}^{m}$ is convex because it is Bayes-consistent. Thus, the optimal decision function $f_{k}^{*}$, can be obtained by setting derivatives of $J_k$ to zero, 
$$\frac{\partial 
J_{k}}{\partial f}=0 \Rightarrow \eta \frac{\partial \phi_{k}(f(\vec{x}),1)}{\partial 
f(\vec{x})}+(1-\eta )\frac{\partial \phi_{k}(f(\vec{x}),-1)}{\partial f(\vec{x})}=0\Longrightarrow f^*_k \text{ is obtained}$$
\noindent
The conditional expected risk for 
our proposed ensemble loss function is described as  
$$c_{L(\eta )}=E\lbrack L_s(f(\vec{x}),y)|\vec{x}\rbrack =\eta \sum_{k}{\lambda 
_{k}\phi_{k}(f(\vec{x}),1)}+(1-\eta )\sum_{k}{\lambda _{k}\phi_{k}(f(\vec{x}),-1)}$$
\noindent
Since $L_s(f(\vec{x}),y)$ is a linear combination of some convex functions, it is also convex. 
For example, by assuming $k=2$, the expected risk for the linear combination of two convex loss functions given an input $\vec{x}$ is written as   
\begin{equation*}
\begin{aligned}
J= c_{L(\eta )}=E\lbrack L(f(\vec{x}),y)|\vec{x}\rbrack =&\\
&\lambda _{1}\lbrack \eta 
\phi_{1}(f(\vec{x}),1)+(1-\eta )\phi_{1}(f(\vec{x}),-1)\rbrack +\\
&\lambda _{2}\lbrack \eta 
\phi_{2}(f(\vec{x}),1)+(1-\eta )\phi_{2}(f(\vec{x}),-1)\rbrack =\\
&\lambda _{1}J_{1}+\lambda 
_{2}J_{2}
\end{aligned}
\end{equation*}

\noindent
where $J_1, J_2$ have been shown in Equation \ref{eq:individual conditional risk}. The optimal decision function under our ensemble loss function is obtained by setting the derivative of $J$ to zero, 
$$\frac{\partial J}{\partial f}=0\Rightarrow\lambda _{1}\frac{\partial 
J_{1}}{\partial f(\vec{x})}+\lambda _{2}\frac{\partial J_{2}}{\partial f(\vec{x})}=0.
$$
 
\begin{figure}[h]
\centering
\begin{tikzpicture}[domain=-1:1,yscale=2,xscale=4,smooth,scale=0.6]
\draw[->][dashed] (0,0) -- (3,0.5) ;
\draw (0,0) node[anchor=north] {$f_1^*$} ;
\draw (3,0.5) node[anchor=north] {$f_2^*$} ;
\draw (2,4) node[anchor=south] {$J_1$} ;
\draw (1,4.5) node[anchor=south] {$J_2$} ;
\draw [color=blue,thick, variable=\x,domain=-0.5:2] plot ({\x},{\x*\x});
\draw [color=red,thick, domain=1:3.5,variable=\x] plot  ({\x},{(\x-3)*(\x-3)+0.5});

 \end{tikzpicture}
\caption{Expected Loss for Two  Bayes-consistent Loss Function \label{fig:beyes consistence}}
\end{figure}
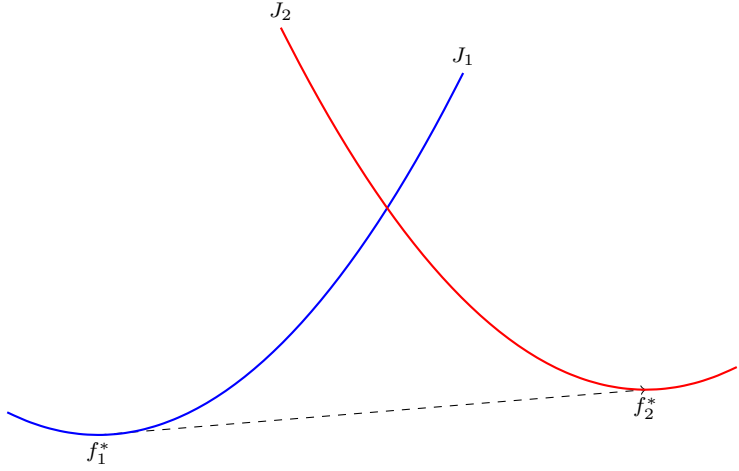

\noindent
Fig. \ref{fig:beyes consistence} shows an example of two convex functions $J_1$ and $J_2$ with the optimal point $f^*_1$ and $f^*_2$ respectively. By looking at Fig. \ref{fig:beyes consistence} the following equations are straightforward. 

\begin{equation}\label{eq:linear combination}
\begin{aligned}
\frac{\partial J_{1}}{\partial f(\vec{x})}\vert _{f_{1}^{*}}<0 \ \ \ \text{and} \ \ \  \frac{\partial J_{2}}{\partial f(\vec{x})}\vert _{f_{1}^{*}}<0 \Rightarrow \\
\frac{\partial J}{\partial f}\vert_{f_{1}^{*}}=\lambda_{1}\frac{\partial J_{1}}{\partial f(\vec{x})}\vert _{f_{1}^{*}}+\lambda_{2}\frac{\partial J_{2}}{\partial f(\vec{x})}\vert _{f_{1}^{*}}<0
\end{aligned}
\end{equation}
\begin{equation}\label{eq:linear combination2}
\begin{aligned}
\frac{\partial J_{1}}{\partial f(\vec{x})}\vert _{f_{2}^{*}}>0 \ \ \ \text{and} \ \ \  \frac{\partial J_{2}}{\partial f(\vec{x})}\vert _{f_{2}^{*}}>0 \Rightarrow \\
\frac{\partial J}{\partial f}\vert_{f_{2}^{*}}=\lambda_{1}\frac{\partial J_{1}}{\partial f(\vec{x})}\vert _{f_{2}^{*}}+\lambda_{2}\frac{\partial J_{2}}{\partial f(\vec{x})}\vert _{f_{2}^{*}}>0
\end{aligned}
\end{equation}
\noindent
Having considered Equation \ref{eq:linear combination} and \ref{eq:linear combination2}, we easily conclude that $f^{*}$ must 
lie between two points $f_{1}^{*}$ and $f_{2}^{*}$, and can be formulated as  $f^{*}=\alpha 
_{1}f_{1}^{*}+(1-\alpha _{1})f_{2}^{*}$. It means that $f^{*}$ is a linear combination of $ f^*_1, f^*_2$. Since for each $ f^*_1,f^*_2$ Equation \ref{eq: Bayes consistency} holds, it also holds for a linear combination of them. Therefore, $f^*$ is also Bayes consistent. This proof can easily be expanded for 
$k$ loss functions in the same way.  
\end{proof}

\subsection{Robustness}
\paragraph{\textbf{Robust loss function}} a distance-based loss function is said to be robust if there is a constant $k$ such that the loss function does not assign large values to samples with $e_i>k$ \cite{genton1998highly}. 

\smallskip
In the following, the mathematical explanation of the robustness is provided. Assuming a linear learner, the empirical risk is formulated as follows:-

\begin{equation*}
\begin{aligned}
J=R_{emp}=\sum_{i=1}^{n}\phi(y_i-\vec{x}_{i}\cdot \vec{w}).
\end{aligned}
\end{equation*}
 \noindent
where $\vec{x}_{i}\cdot \vec{w}$ denotes inner product of two vectors $\vec{x} \text{ and } \vec{w}$. To obtain the optimal value of $\vec{w}$, the following equation has to set to zero, 

\begin{equation*}
\begin{aligned}
\frac{\partial J}{\partial \vec{w}}=0\Rightarrow\sum_{i=1}^{n}\frac{\partial \phi(y_{i}-\vec{x}_{i=1} \cdot\vec{w})}{\partial \vec{w}}=\\
\sum_{i=1}^{n}{\phi'(y_{i}-\vec{x}_{i}\cdot \vec{w})\vec{x}_{i}}=0
\end{aligned}
\end{equation*}

\noindent
where $ e_i=y_{i}-\vec{x}_{i} \cdot\vec{w}$ . Given the weight function $w(e_{i})=\frac{\phi'(e_{i})}{e_{i}}$ , the above equation can be reformulated as follows:-

\begin{equation}
\sum_{i=1}^{n}{\phi '
(y_{i}-\vec{x}_{i}\cdot \vec{w})x_{i}}=\sum_{i=1}^{n}{w(e_{i})\vec{x}_{i}}.
\end{equation} 
\noindent
A loss function is robust if the following equation holds \cite{genton1998highly}.

\begin{equation}\label{eq:robustness}
\exists k, \ \ \  \forall \vec{x}_{i}, \ \lvert e_{i}\rvert >k\Longrightarrow w(e_{i})\to 0.
\end{equation}

\begin{theorem}\label{theorem:robustness}
If all individual loss functions are robust, our proposed ensemble loss function is also robust.
\end{theorem}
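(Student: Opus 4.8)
The plan is to exploit the linearity of the ensemble loss to reduce the robustness of $L$ to the robustness of its constituents. First I would differentiate the ensemble loss with respect to the residual $e_i = y_i - \vec{x}_i \cdot \vec{w}$ and form the associated weight function exactly as in the derivation preceding Equation \ref{eq:robustness}. Because differentiation is linear and the coefficients $\lambda_k$ do not depend on $e_i$, the derivative of $L$ is simply $\sum_k \lambda_k \phi_k'(e_i)$, so the ensemble weight function factors as a convex combination of the individual weight functions:
$$
w_L(e_i) = \frac{L'(e_i)}{e_i} = \sum_{k=1}^m \lambda_k \frac{\phi_k'(e_i)}{e_i} = \sum_{k=1}^m \lambda_k\, w_k(e_i).
$$

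Next I would invoke the robustness hypothesis on each $\phi_k$. By the criterion in Equation \ref{eq:robustness}, for every $k$ there exists a threshold $\kappa_k$ such that $\lvert e_i \rvert > \kappa_k$ forces $w_k(e_i) \to 0$. Since there are only finitely many base loss functions, I would set the single common threshold $\kappa^\star = \max_{1 \le k \le m} \kappa_k$, beyond which every individual weight function vanishes simultaneously.

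Finally I would conclude by bounding $w_L$ directly. For $\lvert e_i \rvert > \kappa^\star$ each term satisfies $w_k(e_i) \to 0$, and since the weights are nonnegative and obey $\sum_k \lambda_k = 1$, the convex combination $\sum_k \lambda_k\, w_k(e_i)$ is squeezed to zero as well. Hence $w_L(e_i) \to 0$ whenever $\lvert e_i \rvert > \kappa^\star$, which is precisely the robustness condition of Equation \ref{eq:robustness} with constant $\kappa^\star$. This establishes the claim, and the same computation extends verbatim to any number $m$ of component losses.

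I do not anticipate a genuine obstacle here, as the argument becomes essentially immediate once the weight function is shown to decompose linearly; the substance of the result is really the observation that a convex combination preserves the asymptotic decay of the weight functions. The only points requiring care are bookkeeping rather than conceptual: the notational clash between the summation index and the robustness constant, both written $k$ in the paper, which I would resolve by renaming the constant to $\kappa^\star$; and the fact that finiteness of the ensemble ($m < \infty$) is what legitimizes both taking the maximum of the thresholds and concluding that a finite weighted sum of vanishing terms still vanishes.
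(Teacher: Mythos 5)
Your proposal is correct and follows essentially the same route as the paper's own proof: decompose the ensemble weight function as $w(e_i)=\sum_{k}\lambda_k w_k(e_i)$, invoke robustness of each base loss to get individual thresholds, and take the maximum threshold so that the convex combination vanishes. Your version is slightly more careful than the paper's (you derive the decomposition explicitly from linearity of differentiation, rename the clashing index-versus-threshold notation, and flag finiteness of $m$ as the enabling hypothesis), but the substance is identical.
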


\begin{proof}

For the ensemble loss function, $w(e_i)$ is written as follows:- 

\begin{equation*}
w(e_i)=\sum_{k=1}^{m}{\lambda _{k}w_{k}(e_{i})}, \sum_{k=1}^m{\lambda 
_{k}}=1,\lambda _{k}>0 
\end{equation*}

\noindent
To prove Theorem \ref{theorem:robustness} we need to prove that $\exists k,\  \forall \vec{x}_{i}\mid \ \lvert e_{i}\rvert >k\Longrightarrow\sum_{k=1}^{m}{\lambda _{k}w_{k}(e_{i})}\to 0$. Since each individual 
loss function is robust then the following equation is straightforward. 
$$\exists k_{k},\ \ \forall \vec{x}_{i} ,\ \ \vert e_{i}\vert >k_{k}\Longrightarrow{w_k(e_{i})},\to 0
$$
where $w_k(.)$ is the weight function corresponding to $\phi_k(.)$. By assuming $k=\max \lbrace k_{k}\rbrace _{k=1}^{m}$, the proof of Equation \ref{eq:robustness} is straightforward.  
\end{proof}

\subsection{Training Phase using our Proposed Loss Function}

\noindent
We aim to minimize the empirical risk of our proposed loss function as follows 

\begin{equation}\label{eq:empirical}
\min _{\vec{w},\lambda} R_{emp}=\sum_{i=1}^{n} L_{s}(y_i - \vec{x}_{i}\cdot\vec{w})=\sum_{i=1}^{n}\sum_{k=1}^{m}\lambda 
_{k} \phi _{ik}(y_i- \vec{x}_{i}\cdot\vec{w})=\sum_{i=1}^{n}\sum_{k=1}^{m}\lambda 
_{k} \phi _{ik}(e_i)
\end{equation}

\noindent
where $L_{s}(.)$ denotes the value of ensemble loss function for $i$th sample, $\lambda 
_{k}$ denotes the weights associated with the $k$th loss function and $\phi _{ik}$ denotes the value of the $k$th loss function for the $i$th sample. We first omit $\lambda 
_{k}$ from Equation \ref{eq:empirical} and will later show that the weights associated with each 
loss function appear implicitly through HQ optimization. According to HQ 
optimization, Equation \ref{eq:empirical} is restated as follows

$$\min 
_{\vec{w},P}J=\sum_{i=1}^{n}{\sum_{k=1}^{m}{Q(P,e_{i})}+\psi 
_{k}(P)}$$

\noindent
where $\psi _{k}$ is the point-wise maximum of some convex functions and consequently
it is convex, $e_i=y-\vec{w}\cdot\vec{x}_i$ is the error associated with the $i$th sample and $P$ is the auxiliary variable. By substituting $Q(P,e_{i})$ with the multiplicative HQ function, 
the following equation is obtained.

\begin{equation}\label{eq:objective}
\min 
_{\vec{w},P}J(\vec{w},P)=\sum_{i=1}^{n}{\sum_{k=1}^{m}{\frac{1}{2}p_{ik}^{}(e_i)^{2}}+\psi 
_{k}(P)}=\sum_{i=1}^{n}{\sum_{k=1}^{m}{\frac{1}{2}p_{ik}^{}(y-\vec{w}\cdot\vec{x}_i)^{2}}+\psi 
_{k}(P)}
\end{equation}

\noindent
where $P$ is a matrix of $n$ rows and $m$ columns. And $p_{ik}$ denotes the element of $i$th row and $k$th column. To solve the above problem, we make use of HQ optimization algorithm which is iterative and alternating. Each iteration $(s)$ consists of two steps as follows \cite{he2014half}

\smallskip
\paragraph{\textbf{First}} by considering $\vec{w}=\vec{w}^{(s)}$ constant, we arrive at the following optimization problem with one variable $P$

\begin{equation}\label{eq:p}
\begin{aligned}
P^{(s+1)}=\arg\min_{P}\sum_{i=1}^{n}\sum_{k=1}^{m}\frac{1}{2}p_{ik}(y_{i}-\vec{w}^{(s)}\cdot \vec{x}_{i})^{2}+\psi_{k}(P). \\
\end{aligned}
\end{equation}

\noindent
The value of $P^{(s+1)}$ is calculated using $\delta(.)$ function which is pre-calculated for some special loss functions $\phi(.)$ and their corresponding conjugate functions $\psi(.)$ which are listed in Fig. \ref{HQ}. The optimal value of $p_{ik}$ is obtained by  $\delta_{k}(e_{i})$.

\smallskip
\paragraph{\textbf{Second}} given the obtained value of $P^{(s+1)}$, we optimize over $\vec{w}$ as follows

\begin{equation}\label{eq:w(1)}
\begin{aligned}
\vec{w}^{(s+1)}=\arg\min_{\vec{w}}j=&\arg\min_{\vec{w}}\sum_{i=1}^{n}\sum_{k=1}^{m}\frac{1}{2}p_{ik}^{(s+1)}(y_{i}-\vec{w})\cdot \vec{x}_{i})^{2} \\
\frac{\partial j}{\partial \vec{w}}=0\Rightarrow \vec{w}^{(s+1)}=&\left[\underbrace{ \sum_{i=1}^{n}\vec{x}_{i}\times \vec{x}_{i}^T\times\sum_{k=1}^{m}{p_{ik}^{(s+1)}}}_{*}\right]^{-1}\sum_{i=1}^{n}\vec{x}_{i}\sum_{k=1}^{m}p_{ik}^{(s+1)}y_{i}
\end{aligned}
\end{equation}

\noindent
where $P^{(s)}$ and $\vec{w}^{(s)}$ are respectively the optimal values for matrix $P$ and vector $\vec{w}$ in the $s$th iteration. The matrix represented by $*$ is generally Positive Semi-Definite (PSD) and might not be invertible. To make it Positive Definite (PD) we add it a diagonal matrix with strictly positive diagonal entries \cite{meyer2000matrix}. The refinement of Equation \ref{eq:w(1)} is as follows

\begin{equation}\label{eq:w}
\begin{aligned}
\vec{w}^{(s+1)}=&\left[\underbrace{ \sum_{i=1}^{n}\vec{x}_{i}\times \vec{x}_{i}^T\times\sum_{k=1}^{m}{p_{ik}^{(s+1)}}}_{*}+\alpha \times I\right]^{-1}\sum_{i=1}^{n}\vec{x}_{i}\sum_{k=1}^{m}p_{ik}^{(s+1)}y_{i}
\end{aligned}
\end{equation}

\noindent
where $\alpha\in R$ is a very small positive number and $I$ is the identity matrix. The Equation \ref{eq:p} and \ref{eq:w} are performed iteratively in an alternating manner until convergence.

\medskip
$P^{(s+1)}$ denotes 
matrix $P$  in $s$th iteration. $p_{ik}^{(s)}$ is the element of the $i$th row and the 
$k$th column of matrix $P$, so, $p_{ik}$ can be considered as the 
weight of the $i$th sample associated with the $k$th loss 
function. Thus, we can calculate the total weight of the $k$th loss function by $\lambda_{k}=\sum_{i=1}^{n}{p_{ik}^{}}$.

\begin{proposition}
 The sequence $\lbrace 
J(P^{(s)},\vec{w}^{(s)}),s=1,2,\ldots \rbrace $ generated by Algorithm \ref{algorithm1} converges.
\end{proposition}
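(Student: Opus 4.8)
The plan is to show that the scalar sequence $\{J(P^{(s)},\vec{w}^{(s)})\}$ is monotonically non-increasing and bounded below, so that it converges by the monotone convergence theorem for real sequences. The only structural facts I need are that each of the two steps of the iteration is a genuine minimization of $J$ over one block of variables while the other block is held fixed, and that the objective in Equation \ref{eq:objective} is bounded below along the trajectory.

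First I would establish the monotone decrease. By construction $P^{(s+1)}$ is a minimizer of $J(\cdot,\vec{w}^{(s)})$ (Equation \ref{eq:p}), so $J(P^{(s+1)},\vec{w}^{(s)}) \le J(P^{(s)},\vec{w}^{(s)})$. Likewise $\vec{w}^{(s+1)}$ is a minimizer of $J(P^{(s+1)},\cdot)$ (Equation \ref{eq:w}), so $J(P^{(s+1)},\vec{w}^{(s+1)}) \le J(P^{(s+1)},\vec{w}^{(s)})$. Chaining these two inequalities yields
\begin{equation*}
J(P^{(s+1)},\vec{w}^{(s+1)}) \le J(P^{(s+1)},\vec{w}^{(s)}) \le J(P^{(s)},\vec{w}^{(s)}),
\end{equation*}
which shows the sequence is non-increasing. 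Here I would note that the second step is well defined because the $\vec{w}$-subproblem is solved in closed form in Equation \ref{eq:w}, where the regularized matrix is made positive definite by the $\alpha I$ term, so the minimizer exists and the inequality is genuine.

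Next I would supply the lower bound. Applying the Half-Quadratic identity \ref{eq: hq2} pointwise, for every admissible $p_{ik}$ we have $\frac{1}{2}p_{ik}e_i^{2} + \psi_k(p_{ik}) \ge \phi_k(e_i) \ge 0$, since $\phi_k(e_i)=\min_{p}\{Q(e_i,p)+\psi_k(p)\}$ and each M-estimator loss of Fig.\ref{HQ} is non-negative. Summing over $i$ and $k$ gives $J(P,\vec{w}) \ge \sum_{i}\sum_{k}\phi_k(e_i) \ge 0$ for all $P$ and $\vec{w}$, so the objective is bounded below by $0$. A non-increasing real sequence that is bounded below converges, which is exactly the claim.

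The argument is routine; the step that warrants the most care is justifying the lower bound, i.e.\ verifying that the substituted multiplicative surrogate $\frac{1}{2}p_{ik}e_i^{2}+\psi_k(P)$ never drops below the true loss. This is guaranteed precisely by the defining minimization property of the conjugate in Equation \ref{eq: hq2}, so the only hypothesis used beyond the definition of the algorithm is that the base losses $\phi_k$ are bounded below. I would also remark that this establishes convergence of the objective values only; convergence of the iterates $(P^{(s)},\vec{w}^{(s)})$ themselves would require additional coercivity or uniqueness assumptions and is not needed for the stated proposition.
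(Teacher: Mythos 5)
Your proof takes essentially the same route as the paper's: both chain the two block-minimization inequalities into $J(\vec{w}^{(s+1)},P^{(s+1)})\le J(\vec{w}^{(s)},P^{(s+1)})\le J(\vec{w}^{(s)},P^{(s)})$ and then conclude by monotone convergence of a non-increasing, bounded-below real sequence. The only difference is one of rigor, in your favor: the paper simply asserts that $J$ is bounded below, whereas you verify it from the HQ conjugate identity $\frac{1}{2}p_{ik}e_i^{2}+\psi_k(p_{ik})\ge\phi_k(e_i)\ge 0$, and you correctly note that this establishes convergence of objective values only, not of the iterates.
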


\begin{proof}
According to Equation \ref{eq:p} and the properties of the minimizer function $\delta(.)$ we have 
$$J(\vec{w}^{(s)},P^{(s+1)})\leq J(\vec{w}^{(s)},P^{(s)})$$
for a fixed $\vec{w}^{(s)}$. And by Equation \ref{eq:w}, for a fixed $p^{(s+1)}$ we have
$$J(\vec{w}^{(s+1)},P^{(s+1)})\leq J(\vec{w}^{(s)},P^{(s+1)}).$$
Summing up the above equations gives us:

$$J(\vec{w}^{(s+1)},P^{(s+1)})\leq J(\vec{w}^{(s)},P^{(s+1)})\leq J(\vec{w}^{(s)},P^{(s)}).$$
 Since the cost function $J(P,\vec{w})$ is below bounded, the sequence $$ \left\lbrace \ldots (\vec{w}^{(s+1)},P^{(s+1)})\leq J(\vec{w}^{(s)},P^{(s+1)})\leq J(\vec{w}^{(s)},P^{(s)})\ldots \right\rbrace$$
decrease and finally converges when  $s\rightarrow \infty$ \cite{he2014half}. 
\end{proof}

\begin{algorithm}
\caption{Our Proposed Ensemble Loss Function using HQ Optimization Algorithm (RELF)}\label{algorithm1}
\textbf{INPUT:}\\
 Streaming samples $\lbrace \vec{x}_{i},y_{i}\rbrace 
_{i=1}^{n}$ \\
 Base loss functions $\lbrace \phi 
_{i}(\vec{x}_{i},y_{i})\rbrace _{i=1}^{m}$\\
\textbf{OUTPUT:} \\
Parameter ${\vec{w}, p}$ in Equation \ref{eq:objective} and $\lbrace\lambda_i\rbrace_{i=1}^m$ (weights associated with each base loss function $\phi_i(.)$)
\begin{algorithmic}[1]

\State Initiate $L_{s}$ using $\lbrace \phi 
_{i}(\vec{x}_{i}.y_{i})\rbrace _{i=1}^{m}$ and random $\lambda_i$
\State Initialized parameters $\vec{w}\sim 
N(0.\Sigma )$ 

\While {\text{ until convergence }} 
\State Compute $\lbrace \phi _{ik}\rbrace _{k=1}^{m}$ for $(\vec{x}_{i}.y_{i})$
\State Update $P$ by $p_{ik}=\delta (\vec{x}_{i},y_{i})$ according to Equation \ref{eq:p}
\State
Update learning parameter $\vec{w}$ according to 
\ref{eq:w} 
\State
$s \gets s+1$
\EndWhile 
\State
$\lambda_j=\sum_{i=1}^n p_{ij}$
\State
\Return $\lbrace P^{(s)}, \vec{w}^{(s)},{\lambda_i}_{i=1}^n \rbrace$ 
\end{algorithmic}
\end{algorithm}

\subsection{Toy Example}

To clarify our proposed approach, we carry out an experiment on small synthetic data. $x$ as input are generated from 
$[-20:20]$ by step size 0.5. Labels lie in a line defined by $y=2\times x+z$, where $z$ denotes noise. We ensemble two loss functions, Welsch and $l_{1}\_l_{2}$, which are selected from functions listed in Fig. \ref{HQ}. We conduct our experiments in presence of two kinds of noises: (1) Zero-mean Gaussian noise and (2) Outliers.

\smallskip
The weights associated with each base loss function and $w$ are iteratively updated according to Equation \ref{eq:p} and \ref{eq:w} respectively. These weights are presented in Table \ref{table:toy}. 
The weights associated with Welsch loss function decreases in presence of outliers while it increases for the $l_{1}\_l_{2}$ function. Welsch and the $l_{1}\_l_{2}$ functions are 
plotted in Fig. \ref{fig: toy}. 

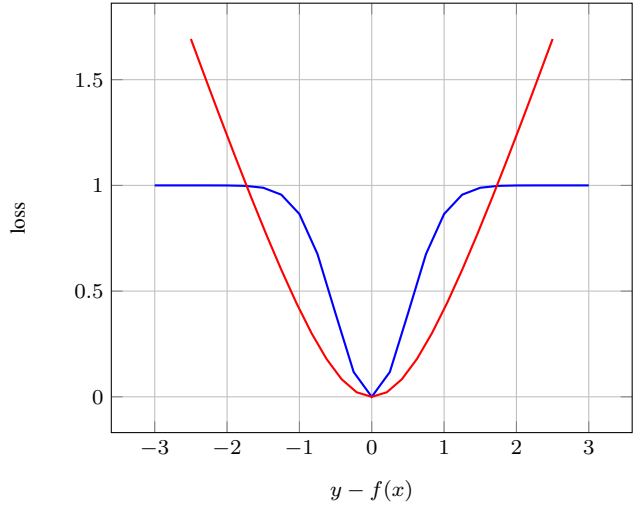
\begin{figure}[h]
\centering
\begin{tikzpicture}
\begin{axis}[grid,xlabel={$y-f(x)$},ylabel=loss]
\addplot [color=blue,thick, domain=-3:3]{1-exp(-2*x^2))};
\addplot [color=red,thick, domain=-2.5:2.5] plot {sqrt(1+x^2)-1};
\end{axis}
\end{tikzpicture}
\caption{Welsch function (red line), $l_{1}\_l_{2}$ function(red line) \label{fig: toy}}
\end{figure}

\begin{table}[h]
\caption{Results for syntactic data \label{table:toy}}
\centering
\begin{tabular}{l l l}
\hline
$\lambda _{i}$ & Gaussian noise & Outliers \\
\hline
$\lambda _{Welsch}$\\
 & 0.1546 & 0.3578 \\
$\lambda _{l_{1}\_l_{2}}$\\
 & 0.8453 & 0.6422 \\
$w$ & 1.9975 & 1.9969 \\
\hline
\end{tabular}
\end{table}

As shown in Fig. \ref{fig: toy}, Welsch is a bounded function and the $l_{1}\_l_{2}$ is unbounded. Therefore, the $l_{1}\_l_{2}$ assigns a larger value to a sample with large  error in comparison with Welsch. Therefore, to decrease the influence of outlier on the predicted function, the value of $\lambda_i$ for 
$l_{1}\_l_{2}$ is less than the corresponding value for Welsch 
function in presence of outliers.   

We also combine three, four and five base loss functions in Fig. \ref{HQ} and the weights associated with each loss function are reported in Table \ref{table: toy2}. Results show that among these five loss functions, Fair loss function contributes the most to form the ensemble loss function. Also, the predicted $w$ has been above value $1.99$ for all experiments.

\begin{table}[h]
    \centering
  \begin{tabular}{c l l l l l l }
    \hline
         Number of base loss functions & kind of noise& \ Welsch & $l_{1}\_l_{2}$ &  Huber &  Fair & Log-cosh \\
         \hline
        3 &Gaussian& 0.26 & 0.58 & 0.14&--&--\\
          3 &Outlier& 0.28 & 0.62 & 0.09&--&--\\
         \hline
        4 &Gaussian& 0.15 &	0.36&	0.07&	0.38&--\\
       4 &Outlier &0.16&	0.37&	0.05&	0.41&--\\
       \hline
      5 &Gaussian& 0.13 &	0.28&	0.05&	0.30& 0.21\\
      5 &Outlier &0.11&	0.27&	0.04&	0.33&0.23\\
      \hline
        
    \end{tabular}
    \caption{Weights associated with each base loss function}
    \label{table: toy2}
\end{table}

\section{Experiments}\label{sec: experiments}

In this section, we evaluate how our proposed loss function works in regression problems. To make our proposed ensemble loss function stronger, we have chosen base loss functions which are diversified by different behaviour against outliers. 
Welsch function is bounded and assigns a small value to samples with large errors. Huber loss function 
penalizes samples linearly with respect to $e_i=y-\vec{x}_{i} \cdot \vec{w}$, while $l_{1}\_l_{2}$  function highly penalizes samples with large $e_i$. Therefore, these three loss functions which are completely diversified by the behaviours against outliers have been chosen as the base functions of our proposed ensemble loss function.  

\medskip
We have conducted our experiments on some different benchmark datasets which are briefly described in Table \ref{table: normal situation}. Mean-Absolute Error (MAE) is utilized to compare 
the results which is calculated as $\frac{1}{N}\sum_{i=1}^N |y_i-\hat{y}_i|$ where $N$ is the number of test samples and $y_i, \hat{y}_i$ are the true and estimated label respectively. In all experiments, we have used $10$ fold cross validation for model selection. It means that the original dataset is partitioned into 10 disjoint subsets. Afterwards, we have run $10$ iterations in each $9$ subsets have been used for training and the remaining one for testing. Every subset has been exactly used once for testing. And the best model parameters has been selected.

\medskip
The data have been initially normalized. 
Table. \ref{table: normal situation} shows the experimental results on some benchmark datasets in natural situation (without adding outliers to samples). It shows that the performance 
of all regressors are somehow the same in all datasets; however, 
RELF achieves the best results.

\begin{table}[h!]
\caption{The Mean Absolute Error (MAE)  comparison results on several datasets. The best result for each dataset is presented in bold. \label{table: normal situation}}
\centering
\begin{tabular}[\textwidth]{l l l l l l l}
\hline
Dataset name & \# of samples & \# of features & Lasso & LARS & SVR & 
RELF \\
\hline
Airfoil self-noise & 1503 & 6 & 3.5076 & 3.5147 & 3.7954 & \textbf{
3.4937} \\
Energy Efficient Dataset & 768 & 8 & 1.9741 & 1.9789 & \bf 1.4021 
& 1.9476 \\
Red Wine Quality Dataset & 4898 & 12 & 2.6605 & 2.6637 & 2.7630 & 
\textbf{2.6561} \\
White Wine Quality Dataset & 4898 & 12 & 1.9741 & 1.9789 & 1.4021 & 
\textbf{1.9733} \\
Abalone Dataset & 4177 & 8 & 2.0345 & 1.8693 & 1.5921 & \textbf{1.5549
} \\
Bodyfat\_Dataset & 252 & 13 & 0.91823 & 0.84867 & 0.6599 & \textbf{
0.52465} \\
Building\_Dataset & 4208 & 14 & 1.017 & 1.0183 & 2.4750 & \textbf{
1.0024} \\
Engine\_Dataset & 1199 & 2 & 1.3102 & 1.4113 & 0.9833 & \textbf{0.88694
} \\
Vinyl\_Dataset & 506 & 13 & 1.1501 & 1.0598 & 0.9854 & \textbf{0.23952
} \\

Simplefit\_Dataset & 94 & 1 & 0.63349 & 0.51819 & 0.4836 & \textbf{
0.38523} \\
\hline
\end{tabular}
\end{table}
 
\subsection{Discussion about the Convergence of HQ Optimization Method}

In this section, the convergence of HQ optimization method in our algorithm is experimentally studied. Table \ref{table: convergence} shows values of cost function, $J(\vec{w},P)$, in $30$ successive  
iterations. We define a \textit{decrease ratio} in Equation \ref{eq: decrease} which  shows how much cost function being reduced in iteration 10 to 30. This ratio has been calculated for each dataset separately and shown in Table \ref{table: convergence}. The results shows that the HQ optimization method quickly converges within 30 
iterations in all cases. Moreover,   $J(\vec{w}^{(10)},P^{(10)})$ is a good approximation of the optimal point. We have also provided the CPU time in seconds for each dataset in Table \ref{table: convergence}.  

\begin{equation}\label{eq: decrease}
decrease \ \ 
ratio=\frac{J(\vec{w}^{(1)},P^{(1)})-J(\vec{w}^{(10)},P^{(10)})}{J(\vec{w}^{(1)},P^{(1)})-J(\vec{w}^{(30)},P^{(30)})}
\end{equation}

\begin{table}[h!]
\centering
\caption{Values of Cost Function, $J(\vec{w},p)$ in several successive Iterations \label{table: convergence}}
\begin{tabular}[\textwidth]{p{2.5cm} p{1 cm} p{1cm} p{0.5cm} p{1cm} p{0.5cm} p{1cm} p{1cm} p{1cm} p{1cm}}
\hline
Dataset name & 1 & 2 & \ldots & 10 & \ldots & 29 & 30 & Decrease Ratio &  CPU time 
\\
\hline

Airfoil self-noise & 617.9089 & 615.9067 &. . & 615.2434 &. . & 615.2074 & 
615.2070 & 0.986528 &0.3125\\

Energy Efficient & 298.9774 & 294.7328 &. . & 291.9451 & . .& 291.9144 & 291.8603 & 0.988085 & 0.6094\\

Red Wine Quality & 658.9971 & 657.8419 &. . & 657.5159 & . .& 657.4989 & 657.4986 & 0.988455 &0.2656\\

White Wine Quality & 324.2221 & 291.8042 &. . & 291.8410 & . .& 
291.6763 & 291.6837 & 0.995166& 1\\

Abalone& 1.5604 & 1.5490 & . .& 1.5362 &. . & 1.5331 & 1.5330 & 
0.883212& 0.6250\\

Bodyfat& 98.6765 & 98.4771 & . .& 98.4393 &. . & 98.4389 & 98.4389 & 
0.998316 &0.0469\\

Building& 3.3932 & 3.3650 &. . & 3.3434 & . .& 3.3400 & 3.3399 & 
0.934334 &    0.8438
\\

Engine & 1.6640 & 1.6026 & . .& 1.5500 &. . & 1.5397 & 1.5394 & 
0.914928 & 0.5313\\

Vinyl & 2.5582 & 2.5574 &. . & 2.5573 &. . & 2.5573 & 2.5573 & 1 &1.9063 \\

Simplefit & 44.2226 & 43.4881 & . .& 43.1884 &. . & 43.1652 & 43.1649 
& 0.977782 &0.0156\\
\hline
\end{tabular}
\end{table}

\subsection{Discussion about Robustness}

In this section, we investigate the robustness of our proposed loss function through the experiment on some benchmark datasets which are provided in Table \ref{table: normal situation}. $10$ fold cross validation has been used to tune the hyper-parameters. To 
study the robustness, we add outliers to training and validation samples. We conduct experiments on data which are corrupted with various level of 
outlier including $10\%$ and $30\%$. $30\%$ outliers which means that we randomly select $30\%$ of samples and add outliers to their labels. Table \ref{table: 10 percent}
and Table \ref{table: 30 percent} show the increase ratio of MAE in the face of outliers in comparison of natural situation (without outliers). The best results are presented with bold. 
Lasso, LARS, and SVR are three well-known regressions which have been used to make comparison.

\begin{table}[h]
\centering
\caption{Increase Ratio of MAE in the Face of 
Outliers (10\%). The best result for each dataset is presented in bold. \label{table: 10 percent}}
\begin{tabular}[\textwidth]{l l l l l}
\hline
Dataset Name & Lasso & LARS & SVR & RELF \\
\hline
Airfoil self-noise & 1.0127 & 1.0191 & 1.0227 & \textbf{1.0016} \\
Energy Efficient Dataset & 1.4025 & 1.3641 & 1.0971 & \textbf{1.0082} 
\\
Red Wine Quality Dataset & 1.0683 & 1.0544 & 1.0235 & \textbf{1.0139} 
\\
White Wine Quality Dataset & 1.2941 & 1.2897 & 1.0971 & \textbf{1.0602
} \\
Abalone Dataset & 1.0264 & 1.1246 & 1.0183 & \textbf{1.0077} \\
Bodyfat\_Dataset & 1.5125 & 1.5279 & \textbf{1.0242} & 1.0692 \\
Building\_Dataset & 1.0016 & 1.0014 & 1.0143 & \textbf{1.0004} \\
Engine\_Dataset & 1.1374 & 1.0242 & 1.0569 & \textbf{1.0089} \\
Vinyl\_Dataset & 1.2345 & 1.148 & 1.1987 & \textbf{1.0978} \\

Simplefit\_Dataset & 1.3122 & 1.8628 & \textbf{1.0023} & 1.2359 \\
\hline
\end{tabular}
\end{table}

\begin{table}[h]
\centering
\caption{Increase ratio of MAE in the face of 
outliers (30\%). The best result for each dataset is presented in bold. \label{table: 30 percent}}
\begin{tabular}{l l l l l}
\hline
Dataset Name & Lasso & LARS & SVR & RELF \\
\hline
Airfoil self-noise & 5.9316 & 6.6159 & 1.0213 & \textbf{1.0178} \\
Energy Efficient Dataset & 2.544 & 2.0297 & 1.1555 & \textbf{1.0016} 
\\
Red Wine Quality Dataset & 1.408 & 2.0033 & 1.0270 & \textbf{1.0024} 
\\
White Wine Quality Dataset & 2.1278 & 2.0297 & 1.1555 & \textbf{1.0219
} \\
Abalone Dataset & 2.0264 & 3.0233 & 1.0338 & \textbf{1.0318} \\
Bodyfat\_Dataset & 2.3491 & 1.1265 & 1.0338 & \textbf{1.2882} \\
Building\_Dataset & 1.985 & 1.1438 & 1.0151 & \textbf{1.0022} \\
Engine\_Dataset & 2.1156 & 2.0233 & 1.3665 & \textbf{1.0061} \\
Vinyl\_Dataset & 3.5624 & 2.2455 & 1.2785 & \textbf{1.0651} \\

Simplefit\_Dataset & 8.674 & 8.3125 & \textbf{1.4424} & 1.5212 \\
\hline
\end{tabular}
\end{table}

Table \ref{table: 10 percent} lists the ratio of MAE value for data with $10\%$ outliers to MAE value for original data. Except two datasets RELF has been least influenced by outliers among other regressors. Table \ref{table: 30 percent} presents the numerical comparison results for the four mentioned models in the presence of $30\%$ outliers. RELF was least influenced by outliers and SVR got the better results in comparison with LASSO and LARS.

\subsection{Comparison with State-of-the-Art Ensemble Regressors}
We also investigate the effectiveness of our proposed method in comparison with several promising ensemble regressors through experiments on some benchmark datasets. The datasets have been selected from LIBSVM data page\footnote{\url{https://www.csie.ntu.edu.tw/~cjlin/libsvmtools/datasets/}}. 
We compare RELF with four ensemble regression methods: Locally Weighted Linear Regression (LWLR) \cite{zhang2016online}, Locally Linear Ensemble for Regression (LLER) \cite{kang2018locally}, Artificial Neural Network (ANN) \cite{rodriguez2015machine} and Random Forest (RF) \cite{rodriguez2015machine}. The first two models are based on combining local experts and the next two models are nonlinear. In the following, the implementation settings for each method is fully provided. 

\smallskip
The LWLR which is based on local expert method aims to build a local linear regressor for each test sample based on its neighbours. The training of each linear model is such that it assigns higher weights to those training samples which are closer to the given test sample. The weights are calculated according to Gaussian kernel which is fully explained in \cite{schaal2002scalable}. 

\smallskip
For the ANN, the number of hidden nodes are selected from $\lbrace 1, 2, 3, 5, 10, 15, 20, 25, 30\rbrace$ set. If the amount of loss for validation batch fails to decrease during six successive epochs, the training phase ends.

\smallskip
To implement RF, we set the number of trees to 100 and bootstrap sample size to 80 percent of training samples for each tree. We set the minimum size of leaf nodes to \{0.1, 0.2, 0.5, 1, 2, 5\} percent of training samples.

\medskip
In all experiments, we use $10$ fold cross validation for parameter tuning, and the data are initially normalized into the range [-1, 1]. Root-Mean-Square-Error (RMSE) is utilized to compare the results which is calculated as $\sqrt[]{\frac{1}{N}\sum_{i=1}^N (y_i-\hat{y}_i)^2}$ where $N$ is the number of test samples and $y_i, \hat{y}_i$ are the true and estimated label respectively. Table \ref{table:datasets} provides some extra information about each dataset.

\begin{table}[H]
\caption{Datasets' description \label{table:datasets}}
\centering
\begin{tabular}{l l l l}
\hline
\bf Dataset & \bf Source & \bf Number of Data& \bf Number of Feature \\
\hline
BodyFat & Source: StatLib / bodyfat& 
 252& 
14\\
Abalone& Source: UCI / Abalone& 
 4,177& 
8 \\
Cadata& Source: StatLib / houses.zip&
 20,640&
 8\\
Cpusmall&Source: Delve / comp-activ&   
 8,192&
 12\\
Housing&Source: UCI / Housing (Boston)&
506&
 13 \\
Mg & Source: [GWF01a]&
1,385&
6\\
\hline

\end{tabular}
\end{table}

The five mentioned models have been numerically compared on 6 datasets and the results are plotted by bar charts in Figure \ref{fig:ensemble comparison}. NN and RF models are nonlinear while LWLR and LLER are locally linear. The RELF yields the lowest RMSE value for almost all datasets except for Abalone and Cpusmall that in which NN and RF get the lowest RMSE value.  

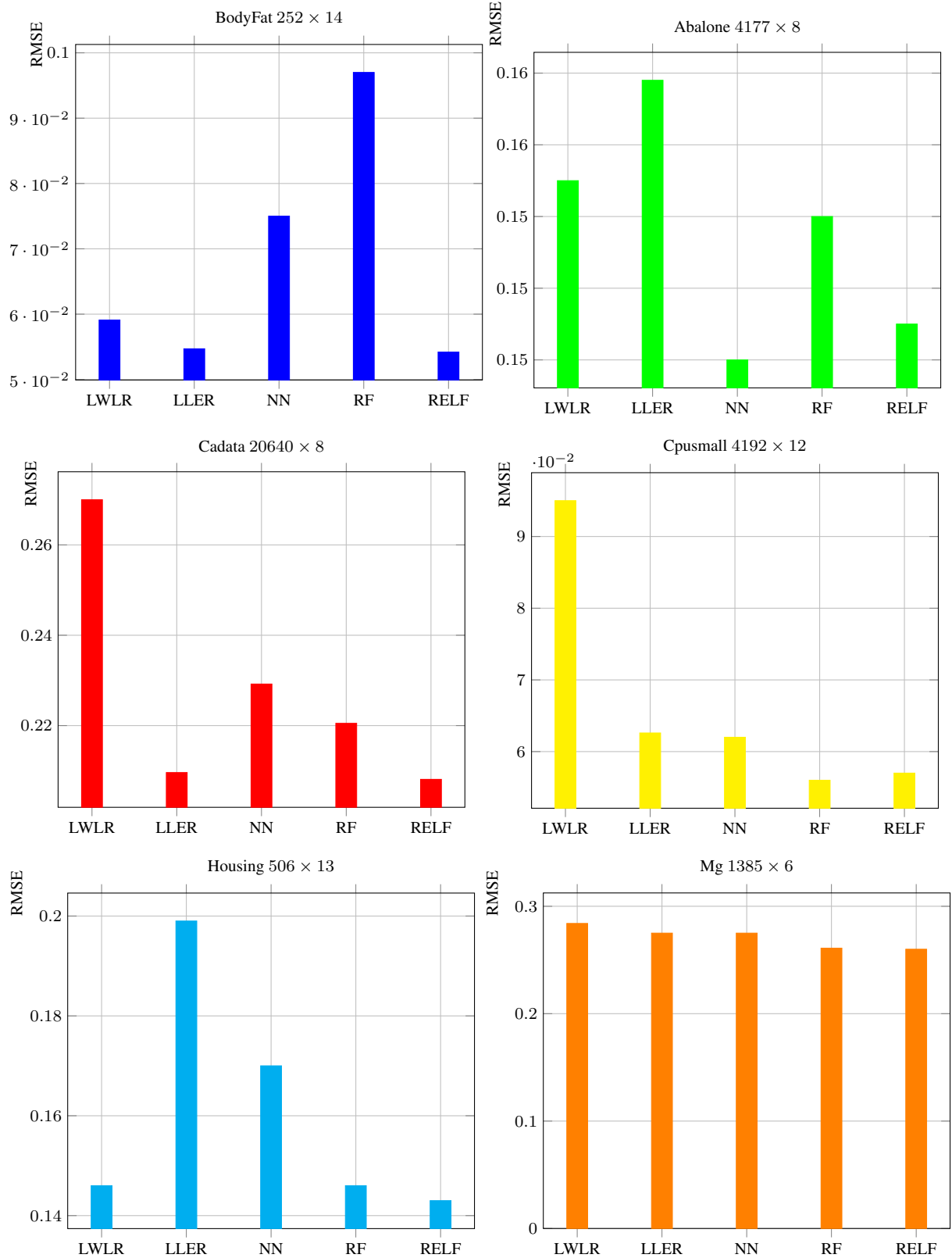
\begin{figure}[htpb]

\begin{subfigure}{0.5\textwidth}\begin{center}
\begin{tikzpicture}
\begin{axis}
	 [title  =  BodyFat $252 \times 14$,
     xmajorgrids = true,
     ymajorgrids = true,
     ybar,
     symbolic x coords={LWLR,LLER,NN,RF,RELF},
      y label style={at={(current axis.above origin)}},
    ylabel=RMSE,
	bar width=10pt,
    ]
\addplot [blue, fill=blue] coordinates {
(LWLR,0.0591) 
(LLER,0.0547)
(RF, 0.097)
(NN, 0.075)
(RELF,0.0542)
};
\end{axis}
\end{tikzpicture}
\end{center}
\end{subfigure}%
\begin{subfigure}{.5\textwidth}
\begin{center}
\begin{tikzpicture}
\begin{axis}
[
	 title  =  Abalone $4177 \times 8$,
     ybar,
     xmajorgrids = true,
     ymajorgrids = true,
     symbolic x coords={LWLR,LLER,NN,RF,RELF},
    y label style={at={(current axis.above origin)},right=0.1mm},
    ylabel=RMSE,
	bar width=10pt,
    ]
\addplot [green, fill=green] coordinates {
(LWLR,0.155) 
(LLER,0.1578)
(RF, 0.154)
(NN, 0.150)
(RELF,0.151)
};
\end{axis}
\end{tikzpicture}
\end{center}
\end{subfigure}

\medskip
\begin{subfigure}{.5\textwidth}
\begin{center}
\begin{tikzpicture}
\begin{axis}
[
	 title  =  Cadata $ 20640 \times 8$,
     ybar,
     xmajorgrids = true,
     ymajorgrids = true,
     symbolic x coords={LWLR,LLER,NN,RF,RELF},
     y label style={at={(current axis.above origin)},below=2mm},
    ylabel=RMSE,
	bar width=10pt,
    ]
\addplot [red, fill=red] coordinates {
(LWLR,0.2700) 
(LLER,0.2096)
(RF, 0.2205)
(NN, 0.2292)
(RELF,0.2081)
};
\end{axis}
\end{tikzpicture}
\end{center}
\end{subfigure}
\begin{subfigure}{.5\textwidth}
\begin{center}
\begin{tikzpicture}
\begin{axis}
[
	 title  =  Cpusmall $4192 \times 12$,
     ybar,
     xmajorgrids = true,
     ymajorgrids = true,
     symbolic x coords={LWLR,LLER,NN,RF,RELF},
     y label style={at={(current axis.above origin)},below=2 mm},
    ylabel=RMSE,
	bar width=10pt,
    ]
\addplot [yellow, fill=yellow] coordinates {
(LWLR,0.095) 
(LLER,0.0626)
(RF, 0.056)
(NN, 0.062)
(RELF,0.057)
};
\end{axis}
\end{tikzpicture}
\end{center}
\end{subfigure}

\medskip
\begin{subfigure}{.5\textwidth}
\begin{center}
\begin{tikzpicture}
\begin{axis}
[
	 title  =  Housing $ 506 \times 13$,
     ybar,
     symbolic x coords={LWLR,LLER,NN,RF,RELF},
     xmajorgrids = true,
     ymajorgrids = true,
y label style={at=(current axis.above origin),anchor=south},
    ylabel=RMSE,
	bar width=10pt,
    ]
\addplot [cyan, fill=cyan] coordinates {
(LWLR,0.146) 
(LLER,0.199)
(RF, 0.146)
(NN, 0.170)
(RELF,0.143)
};
\end{axis}
\end{tikzpicture}
\end{center}
\end{subfigure}
\begin{subfigure}{.5\textwidth}
\begin{center}
\begin{tikzpicture}
\begin{axis}
[
	 title  =  Mg $  1385 \times 6$,
     ybar,
     xmajorgrids = true,
     ymajorgrids = true,
     symbolic x coords={LWLR,LLER,NN,RF,RELF},
     y label style={at=(current axis.above origin),anchor=south},
    ylabel=RMSE,
    ymin=0, 
	bar width=10pt,
    ]
\addplot [orange, fill=orange] coordinates {
(LWLR,0.284) 
(LLER,0.275)
(RF, 0.261)
(NN, 0.275)
(RELF,0.260)
};
\end{axis}
\end{tikzpicture}
\end{center}
\end{subfigure}
\caption{RMSE comparison results on benchmark datasets.  \label{fig:ensemble comparison}}
\end{figure}

\section{Conclusion and Future Works}\label{sec: conclusion}

Inspired by ensemble methods, in this paper we proposed a regression extended with an ensemble loss function which is called RELF. We focused on regression tasks and it is considered as an initial attempt to explore the use of an ensemble loss functions. In this work, we utilized 
Half-Quadratic programming to find the optimal weight associated with each loss function. 

\medskip
RELF has been implemented and evaluated in noisy environments which showed a significant improvement with respect to outliers in comparison with simple regressors. We have also investigated the performance of RELF in comparison with some promising ensemble regressors through experiments on several benchmark datasets. Moreover, according to our results, the HQ 
optimization method has quickly converged. 

\medskip
As a future work, we aim to use an evolving solution that all the loss functions are applied and ranked on non-dominance. All those that are not dominated become the leaders. It would converge faster than individually applied loss functions. As another future work, we plan to make our proposed ensemble loss function sparse by adding a regularization term. In addition, although the focus of this paper is on regression, The concept of ensemble loss function can also be applied to other classification based applications. Another possible extension is making the ensemble loss function sparse by adding a regularization term. 

\bibliographystyle{plain}
\bibliography{main}

\end{document}